\def\eqref#1{equation~\ref{#1}}
\def\1{\bm{1}}
\DeclareMathAlphabet{\mathsfit}{\encodingdefault}{\sfdefault}{m}{sl}
\SetMathAlphabet{\mathsfit}{bold}{\encodingdefault}{\sfdefault}{bx}{n}
\newcommand{\cA}{\mathcal{A}}
\newcommand{\cB}{\mathcal{B}}
\newcommand{\cF}{\mathcal{F}}
\newcommand{\cI}{\mathcal{I}}
\newcommand{\cL}{\mathcal{L}}
\newcommand{\cS}{\mathcal{S}}
\newcommand{\cT}{\mathcal{T}}
\newcommand{\Real}{\mathbb{R}}
\renewcommand{\vec}[1]{\ensuremath{\bm{#1}}}
\newcommand{\mat}[1]{\ensuremath{\mathbf{#1}}}
\newtheorem{proposition}{Proposition}
\newtheorem{proof}{Proof}
\newcommand\Item[1][]{%
  \ifx\relax#1\relax  \item \else \item[#1] \fi
  \abovedisplayskip=0pt\abovedisplayshortskip=0pt~\vspace*{-\baselineskip}}
\renewrobustcmd{\bfseries}{\fontseries{b}\selectfont}
\renewcommand{\pm}{\mathbin{\mbox{\unboldmath$\mathchar"2206$}}}
\title{Deep Reinforcement and InfoMax Learning}
\author{%
  Bogdan Mazoure$^1$\thanks{Equal contribution. $^1$Work done during an internship at Microsoft Research Montr\'{e}al. Correspondence to: \texttt{bogdan.mazoure@mail.mcgill.ca}} \\
  McGill University, Mila\\
  \And
  R\'{e}mi Tachet des Combes$^{*}$ \\
  Microsoft Research Montr\'{e}al \\
  \And
  Thang Doan \\
  McGill University, Mila \\
  \And
  Philip Bachman \\
  Microsoft Research Montr\'{e}al\\
  \And
  R Devon Hjelm \\
  Microsoft Research Montr\'{e}al\\
  Universit\'{e} de Montr\'{e}al, Mila \\
}
\begin{document}

\maketitle

\begin{abstract}
    We begin with the hypothesis that a model-free agent whose representations are predictive of properties of future states (beyond expected rewards) will be more capable of solving and adapting to new RL problems. To test that hypothesis, we introduce an objective based on Deep InfoMax (DIM) which trains the agent to predict the future by maximizing the mutual information between its internal representation of successive timesteps. We test our approach in several synthetic settings, where it successfully learns representations that are predictive of the future. Finally, we augment C51, a strong RL baseline, with our temporal DIM objective and demonstrate improved performance on a continual learning task and on the recently introduced Procgen environment.
\end{abstract}

\section{Introduction}
In reinforcement learning (RL), model-based agents are characterized by their ability to predict future states and rewards based on past states and actions~\citep{SuttonBarto98,ha2018world,hafner2019dream}.
Model-based methods can be seen through the \emph{representation learning}~\citep{goodfellow2017learning} lens as endowing the agent with internal representations that are predictive of the future conditioned on its actions.
This ultimately gives the agent means to plan -- by e.g. considering a distribution of possible future trajectories and picking the best course of action.

In contrast, model-free methods do not explicitly model the environment, and instead learn a policy that maximizes reward or a function that estimates the optimal values of states and actions~\citep{mnih2015human,schulman2017ppo,pong2018temporal}.
They can use large amounts of training data and excel in high-dimensional state and action spaces. However, this is mostly true for fixed reward functions; despite success on many benchmarks, model-free agents typically generalize poorly when the environment or reward function changes~\citep{farebrother2018generalization,journals/corr/abs-1809-02591} and can have high sample complexity.

Viewing model-based agents from a representation learning perspective, a desired outcome is an agent that understands the underlying generative factors of the environment that determine the observed state/action sequences, leading to generalization to other environments built from the same generative factors. In addition, learning a predictive model affords a richer learning signal than those provided by reward alone, which could reduce sample complexity compared to model-free methods.

Our work is based on the hypothesis that a model-free agent whose representations are predictive of properties of future states (beyond expected rewards) will be more capable of solving and adapting to new RL problems and, in a way, incorporate aspects of model-based learning.
To learn representations with model-like properties, we consider a self-supervised objective derived from variants of Deep InfoMax~\citep[DIM,][]{hjelm2018learning, bachman2019learning, anand2019unsupervised}.
We expect this type of contrastive estimation~\citep{hyvarinen2016unsupervised} will give the agent a better understanding of the underlying factors of the environment and how they relate to its actions, eventually leading to better performance in transfer and lifelong learning problems.
We examine the properties of the learnt representations in simple domains such as disjoint and glued Markov chains, and more complex environments such as a 2d Ising model, a sequential variant of Ms.~PacMan from the Atari Learning Environment~\citep[ALE,][]{bellemare2013arcade}, and all 16 games from the Procgen suite~\citep{cobbe2019leveraging}.
Our contributions are as follows:
\begin{itemize}
\item We propose a simple auxiliary objective that maximizes concordance between representations of successive states, given the action. We also introduce a simple adaptive mechanism that adjusts the time-scales of the contrastive tasks based on the likelihood of subsequent actions under the current RL policy.

\item We present a series of experiments showing how our objective can be used as a measure of similarity and predictability, and how it behaves in partially deterministic systems.

\item Finally, we show that augmenting a standard RL agent with our contrastive objective can i) lead to faster adaptation in a continual learning setting, and ii) improve overall performance on the Procgen suite.
\end{itemize}
\section{Background}
Just as humans are able to retain old skills when taught new ones~\citep{wixted2004psychology}, we strive for RL agents that are able to adapt quickly and reuse knowledge when presented a sequence of different tasks with variable reward functions. 
The reason for this is that real-world applications or downstream tasks can be difficult to anticipate before deployment, particularly with complex environments involving other intelligent agents such as humans.
Unfortunately, this proves to be very challenging even for state-of-the-art systems~\citep{atkinson2018pseudo}, leading to complex deployment scenarios.

Continual Learning (CL) is a learning framework meant to benchmark an agent's ability to adapt to new tasks by using auxiliary information about the relatedness across tasks and timescales~\citep{kaplanis2018continual,mankowitz2018unicorn,doan2020theoretical}.
Meta-learning~\citep{thrun1998learning,finn2017model} and multi-task learning~\citep{hessel2019multi,d2019sharing} have shown good performance in CL by explicitly training the agent to transfer well between tasks. 

In this study, we focus on the following inductive bias: while the reward function may change or vary, the underlying environment dynamics typically do not change as much\footnote{This is not true in all generalization settings. Generalization still has a variety of specifications within RL. In our work, we focus on the setting where the rewards change more rapidly than the environment dynamics.}. 
To test if that inductive bias is useful, we use \emph{auxiliary loss functions} to encourage the agent to learn about the underlying generative factors and their associated dynamics in the environment, which can result in better sample efficiency and transfer capabilities (compared to learning from rewards only). Previous work has shown this idea to be useful when training RL agents: e.g., \cite{jaderberg2016reinforcement} train the agent to predict future states given the current state-action pair, while \cite{mohamed2015variational} uses empowerment to measure concordance between a sequence of future actions and the end state.
Recent work such as DeepMDP~\citep{gelada2019deepmdp} uses a latent variable model to represent transition and reward functions in a high-dimensional abstract space. In model-based RL, various agents, such as PlaNet~\citep{hafner2019learning}, Dreamer~\citep{hafner2019dream}, or MuZero~\citep{schrittwieser2019mastering}, have also shown strong asymptotic performance.

Contrastive representation learning methods are based on training an encoder to capture information that is shared across different views of the data in the features it produces for each input. The similar (i.e. positive) examples are typically either taken from different ``locations'' of the data~\citep[e.g., spatial patches or temporal locations, see][]{hjelm2018learning, oord2018representation, anand2019unsupervised, henaff2019data} or obtained through data augmentation~\citep{wu2018unsupervised, he2019momentum, bachman2019learning, tian2019contrastive, chen2020simple}.
Contrastive models rely on a variety of objectives to encourage similarity between features. 
Typically, a scoring function~\citep[e.g., dot product or cosine similarity between pairs of features, see][]{wu2018unsupervised} that lower-bounds mutual information is maximized~\citep{belghazi2018mine, hjelm2018learning, oord2018representation, poole2019variational}.

A number of works have applied the above ideas to RL settings.
Contrastive Predictive Coding \citep[CPC,][]{oord2018representation} augments an A2C agent with an autoregressive contrastive task across a sequence of frames, improving performance on 5 DeepMind lab games~\citep{beattie2016deepmind}. 
EMI~\citep{kim2019emi} uses a Jensen-Shannon divergence-based lower bound on mutual information across subsequent frames as an exploration bonus.
CURL~\citep{srinivas2020curl} uses a contrastive task using augmented versions of the same frame (does not use future frames) as an auxiliary task to an RL algorithm. 
Finally, HOMER~\citep{misra2019kinematic} produces a policy cover for block MDPs by learning backward and forward state abstractions using contrastive learning objectives. It is worth noting that HOMER has statistical guarantees for its performance on certain hard exploration problems.

Our work, DRIML, predicts future states conditioned on the current state-action pair at multiple scales, drawing upon ideas encapsulated in Augmented Multiscale Deep InfoMax~\citep[AMDIM,][]{bachman2019learning} and Spatio-Temporal DIM~\citep[ST-DIM,][]{anand2019unsupervised}.
Our method is flexible w.r.t. these tasks: we can employ the DIM tasks over features that constitute the full frame (global) or that are specific to local patches (local) or both. It is also robust w.r.t. time-scales of the contrastive tasks, though we show that adapting this time scale according to the predictability of subsequent actions under the current RL policy improves performance substantially.

\section{Preliminaries}
We assume the usual Markov Decision Process (MDP) setting (see Appendix for details), with the MDP denoted as $\mathcal{M}$, states as $s$, actions as $a$, and the policy as $\pi$.
Since we focus on exploring the role of auxiliary losses in continuous learning, we use C51~\citep{bellemare2017distributional}, which extends DQN~\citep{mnih2015human} to predict the full distribution of potential future rewards, for training the agent due to its strong performance on control tasks from pixels. C51 minimizes the following loss:
\begin{equation}
    \cL_{RL}=D_{KL}(\lfloor \cT Z(s,a)\rfloor_{51} ||Z(s,a)),
    \label{eq:C51_loss}
\end{equation}
where $D_{KL}$ is the Kullback-Leibler divergence, $Z(s,a)$ is the distribution of future discounted returns under the current policy ($\mathbb{E}[Z(s,a)]=Q(s,a)$), $\cT$ is the distributional Bellman operator~\citep{bellemare2019distributional} and $\lfloor \cdot \rfloor_{51}$ is an operator which projects $Z$ onto a fixed support of $51$.

\subsection{State-action mutual information maximization}
Mutual information (MI) measures the amount of information shared between a pair of random variables and can be estimated using neural networks~\citep{belghazi2018mine}.
Recent representation learning algorithms~\citep{oord2018representation, hjelm2018learning, tian2019contrastive, he2019momentum} train encoders to maximize the MI between features taken from different views of the input  -- e.g., different patches in an image, different timesteps in a sequence, or different versions of an image produced by applying data augmentation to it.

Let $k$ be some fixed temporal offset. Running a policy $\pi$ in the MDP $\mathcal{M}$ generates a distribution over tuples $(s_t,a_t,s_{t+k})$, where $s_t$ corresponds to the state of $\mathcal{M}$ at some timestep $t$, $a_t$ to the action selected by $\pi$ in state $s_t$ and $s_{t+k}$ to the state of $\mathcal{M}$ at timestep $t+k$, reached by following $\pi$. $S_t$, $A_t$ and $S_{t+k}$ stand for the corresponding random variables. We also denote the joint distribution of these variables, as well as their associated marginals, using $p$.
We are interested in learning representations of state-action pairs that have high MI with the representation of states later in the trajectory. The MI between e.g. state-action pairs $(S_t,A_t)$ and their future states $S_{t+k}$ is defined as follows:
\begin{equation}
    \cI([S_t, A_t], S_{t+k}; \pi) = \mathbb{E}_{p_{\pi}(s_t,a_t,s_{t+k})} \bigg[\log\frac{p_{\pi}(s_t,a_t,s_{t+k})}{p_{\pi}(s_t,a_t)p_{\pi}(s_{t+k})}\bigg],
    \label{eq:MI}
\end{equation}
where $p_{\pi}$ denotes distributions under $\pi$. Estimating the MI can be done by training a classifier that discriminates between a sample drawn from the joint distribution -- the numerator of Eq.~\ref{eq:MI} -- and a sample from the product of marginals -- its denominator. A sample from the product of marginals is usually obtained by replacing $s_{t+k}$ (positive sample) with a state picked at random from another trajectory (negative sample). Letting $S^{-}$ denote a set of such negative samples, the infoNCE loss function~\citep{gutmann2010noise, oord2018representation} that we use to maximize a lower bound on the MI in Eq.~\ref{eq:MI} (with the added encoders for the states and actions) takes the following form:
\begin{equation}
    \cL_{NCE} :=- \mathbb{E}_{p_{\pi}(s_t, a_t, s_{t+k})} \mathbb{E}_{S^{-}} \bigg[ \log \frac{\exp(\phi(\Psi(s_t, a_t), \Phi(s_{t+k})))}{\sum_{s^{\prime} \in S^{-} \cup \{s_{t+k}\}} \exp(\phi(\Psi(s_t, a_t), \Phi(s^{\prime})))} \bigg],
    \label{eq:original_nce}
\end{equation}
where $\Psi(s, a), \Phi(s)$ are features that depend on state-action pairs and states, respectively, and $\phi$ is a function that outputs a scalar-valued \emph{score}.
Minimizing $ \cL_{NCE}$ with respect to $\Phi, \Psi$, and $\phi$ maximizes the MI between these features.
In practice, we construct $S^{-}$ by including all states $\tilde{s}_{t+k}$ from other tuples $(\tilde{s}_t, \tilde{a}_t, \tilde{s}_{t+k})$ in the same minibatch as the relevant $(s_t, a_t, s_{t+k})$. I.e., for a batch containing $N$ tuples $(s_t, a_t, s_{t+k})$, each $S^{-}$ would contain $N-1$ negative samples.

\section{Architecture and Algorithm}
\label{sec:algo}

We now specify forms for the functions $\Phi, \Psi$, and $\phi$. We consider a deep neural network $\Theta : \cS \to \prod_{i=1}^5 \cF_i$ which maps input states onto a sequence of progressively more ``global'' (or less ``local'') feature spaces.
In practice, $\Theta$ is a CNN composed of functions that sequentially map inputs to features $\{f_i \in \cF_i\}_{1\leq i \leq 5}$ (lower to upper ``levels'' of the network). For ease of explanation, we formulate our model using specific features (e.g., \emph{local} features $f_3$ and \emph{global} features $f_4$), but our model covers any set of features extracted from $\Theta$ used for the objective below as well as other choices for $\Theta$.

The features ${f_5}$ are the output of the network's last layer and correspond to the standard C51 value heads (i.e., they span a space of 51 atoms per action)~\footnote{$\Theta$ can equivalently be seen as the network used in a standard C51.}.
For the auxiliary objective, we follow a variant of Deep InfoMax~\citep[DIM,][]{hjelm2018learning, anand2019unsupervised,bachman2019learning}, and train the encoder to maximize the mutual information (MI) between local and global ``views'' of tuples $(s_t, a_t, s_{t+k})$.
The local and global views are realized by selecting $f_3 \in \cF_3$ and $f_4 \in \cF_4$ respectively.
In order to simultaneously estimate and maximize the MI, we embed the action (represented as a one-hot vector) using a function $\Psi_a: \cA \to \tilde{\cA}$. We then map the local states $f_3$ and the embedded action using a function $\Psi_3:\cF_3\times \tilde{\cA} \to L$, and do the same with the global states $f_4$, i.e., $\Psi_4:\cF_4 \times \tilde{\cA} \to G$. In addition, we have two more functions, $\Phi_3:\cF_3 \to L$ and $\Phi_4:\cF_4 \to G$ that map features without the actions,  which will be applied to features from ``future'' timesteps. Note that $L$ can be thought of as a product of local spaces (corresponding to different patches in the input, or equivalently different receptive fields), each with the same dimensionality as $G$.

We use the outputs of these functions to produce a scalar-valued score between any combination of local and global representations of state $s_t$ and $s_{t+k}$, conditioned on action $a_t$:
\begin{align}
    \phi_{NtM}(s_t, a, s_{t+k}) &:= \Psi_N(f_N(s_t), \Psi_a(a_t))^\top \Phi_{M}(f_M(s_{t+k})), \; M, N\in\{3, 4\}. \label{eq:nce_scores}
\end{align}

In practice, for the functions that take features and actions as input, we simply concatenate the values at position $f_3$ (local) or $f_4$ (global) with the embedded action $\Psi_a(a)$, and feed the resulting tensor into the appropriate function $\Psi_{3}$ or $\Psi_{4}$. All functions that process global and local features are computed using $1\times1$ convolutions. See Figure~\ref{fig:architecture} for a visual representation of our model.

We use the scores from Eq.~\ref{eq:nce_scores} when computing the infoNCE loss~\citep{oord2018representation} for our objective, using $(s_t, a_t, s_{t+k})$ tuples sampled from trajectories stored in an experience replay buffer:
 \begin{equation}
     \cL_{DIM}^{NtM}:=-\mathbb{E}_{p_{\pi}(s_t, a_t, s_{t+k})} \mathbb{E}_{S^{-}} \bigg[ \log \frac{\exp(\phi_{NtM}(s_t, a_t, s_{t+k}))}{\sum_{s^{\prime} \in S^{-} \cup \{s_{t+k}\}} \exp(\phi_{NtM}(s_t, a_t, s^{\prime}))} \bigg]\;.
     \label{eq:average_infonce}
 \end{equation}

Combining Eq.~\ref{eq:average_infonce} with the RL update in Eq.~\ref{eq:C51_loss} yields our full training objective, which we call DRIML~\footnote{Deep Reinforcement and InfoMax Learning}. We optimize $\Theta, \Psi_{3,4,a}$, and $\Phi_{3,4}$ jointly using a single loss function:
 \begin{equation}
   \cL_{DRIML} = \cL_{RL}+\sum_{M,N\in \{3,4\}}\lambda_{NtM} \cL^{NtM}_{DIM}
     \label{eq:composite_loss}
 \end{equation}

Note that, in practice, the compute cost which Eq.~\ref{eq:composite_loss} adds to the core RL algorithm is minimal, since it only requires additional passes through the (small) state/action embedding functions followed by an outer product.

 \begin{algorithm}[H]
 \SetAlgoLined
 \SetKwInOut{Input}{Input}
 \SetKwInOut{Output}{Output}
 \Input{Batch $\cB$ sampled from the replay buffer, $\{\lambda_{NtM}\}_{N,M\in\{3,4\}}$, strictly positive integer $k$}
 Update $\Theta$ using Eq.~\ref{eq:C51_loss}\;
 $s,a,s',x\leftarrow \cB[s_t],\cB[a_t],\cB[s_{t+k}],\cB[s_{t'\neq t+k}]$\;
 \For{N in $\{$3,4$\}$}{
 \For{M in $\{$3,4$\}$}{
 \If{$\lambda_{NtM}>0$}{
 Compute $\cL_{DIM}^{NtM}$ using Eq.~\ref{eq:average_infonce} (see Appendix~\ref{code:dim_batch_scores} for PyTorch code)\;
 Update $\Theta, \Psi_{3,4,a}$, and $\Phi_{3,4}$ using gradients of $\lambda_{NtM}\cL_{DIM}^{NtM}$\;
 }
 }
 }
 \label{algo:main}
  \caption{Deep Reinforcement and InfoMax Learning (DRIML)}
 \end{algorithm}

 \begin{figure}[h!]
     \begin{center}
         \includegraphics[width=\linewidth]{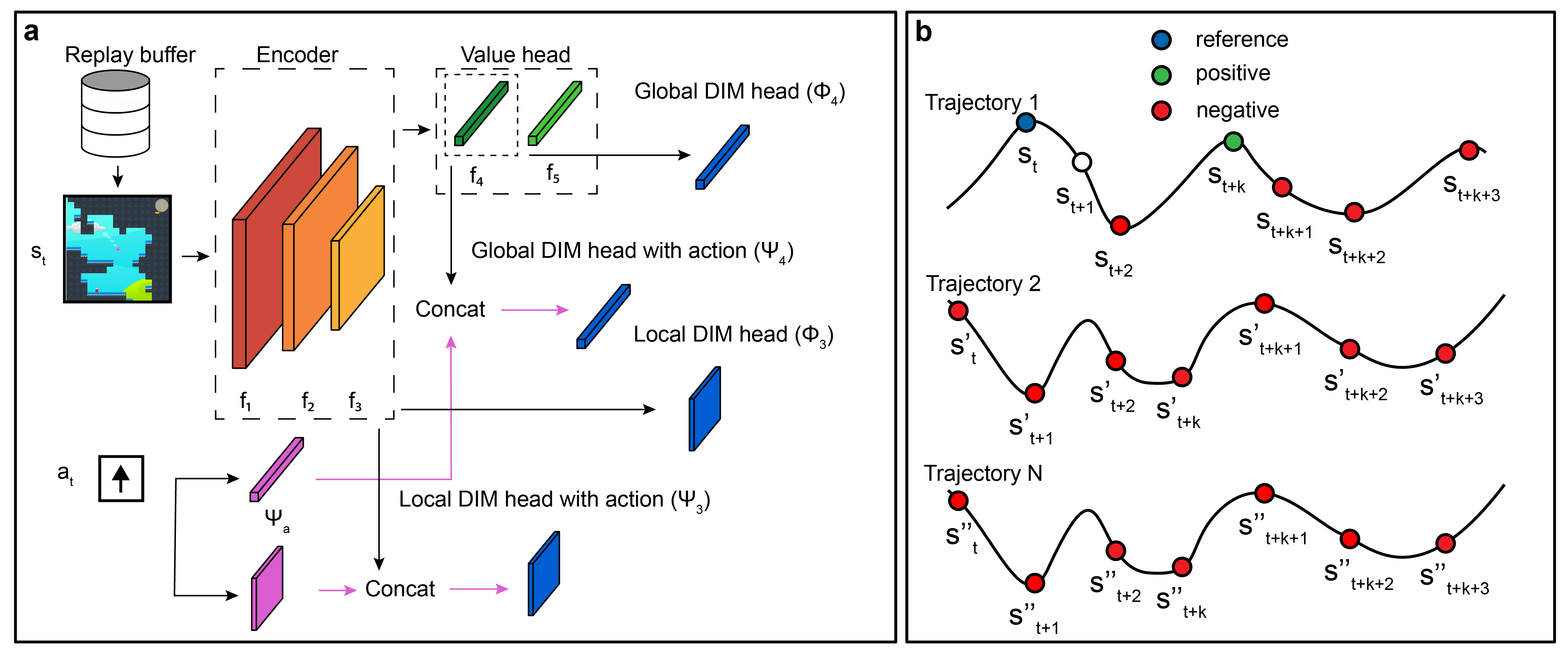}
         \caption{\textbf{(a)} Example model architecture used for the encoder used for the RL and DIM objectives and \textbf{(b)} distribution of reference, positive and negative samples within training batch $\cB$. Note that in our experiments, we either use only the local head, only the global head, or both.}
     \end{center}
     \label{fig:architecture}
\end{figure}

The proposed Algorithm~\ref{algo:main} introduces an auxiliary loss which improves predictive capabilities of value-based agents by boosting similarity of representations close in time.

\section{Finding the Best Task Timescale}
The above DRIML algorithm fixes the temporal offset for the contrastive task, $k$, which needs to be chosen a-priori.
However, different games are based on MDPs whose dynamics operate at different timescales, which in turn means that the difficulty of predictive tasks across different games will vary at different scales.
We could predict simultaneously at multiple timescales~\citep[as in][]{oord2018representation}, yet this introduces additional complexity that could be overcome by simply finding the \emph{right} timescale.
In order to ensure our auxiliary loss learns useful insights about the underlying MDP, as well as make DRIML more generally useful across environments, we adapt the temporal offset $k$ automatically based on the distribution of the agent's actions.

We propose to select an adaptive, game-specific look-ahead value $k$, by learning a function $q_{\pi}(a_i,a_j)$ which measures the log-odds of taking action $a_j$ after taking action $a_i$  when following policy $\pi$ in the environment (i.e. $p_\pi(A_{t+1} = a_j | A_t = a_i) / p_\pi(A_{t+1} = a_j)$).
The values $q_{\pi}(a_i,a_j)$ are then used to sample a look-ahead value $k\in \{1,..,H\}$ from a non-homogeneous geometric (NHG) distribution. This particular choice of distribution was motivated by two desirable properties of NHG: \textbf{(a)} any discrete positive random variable can be represented via NHG~\citep{mandelbaum2007nonhomogeneous} and \textbf{(b)} the expectation of $X\sim NHG(q_1,..,q_H)$ obeys the rule $1/\max_i q_i\leq \mathbb{E}[X]\leq 1/\min_i q_i$.
The intuition is that, if the state dynamics are regular, this should be reflected in the predictability of the future actions conditioned on prior actions. Our algorithm captures this through a matrix $\mat{A}$, whose $i$-th row is the softmax of $q_{\pi}(a_i,\cdot)$.
$q_{\pi}(a_i,a_j)$ is learned off-policy using the same data from the buffer as the main algorithm; it is updated at the same frequency as the main DRIML parameters and trained to approximate the relevant log odds.
This modification is small in relation to the DRIML algorithm, but it substantially improves results in our experiments.
The sampling of $k$ is done via Algorithm~\ref{algo:adapt}, and additional analysis of the adaptive temporal offset is provided in Figure~\ref{fig:adaptive_procgen}.

 \begin{algorithm}[H]
 \SetAlgoLined
 \SetKwInOut{Input}{Input}
 \SetKwInOut{Output}{Output}
 \Input{Tuple $(s_t,a_t,a_{t+1},...,a_{t+H})$, maximal horizon $H$, stochastic matrix $\mat{A}$ of size $\mathcal{A}\times \mathcal{A}$}
 \Output{Lookahead value $k:1\leq k\leq H$}
 
 \For{$i$ in $1,...,H$}{
 $k \leftarrow i$;
 \tcp{Updating value of k}
 
 $b\sim \text{Bernoulli}(\mat{A}_{a_{t+i-1},a_{t+i}})$;
 
 \If{$b==0$}{
 break;
 }
 }
 \label{algo:adapt}
  \caption{Adaptive lookahead selection}
 \end{algorithm}
 
 Figure~\ref{fig:adaptive_procgen} shows the impact of adaptively selecting $k$ using the NHG sampling method. For instance, (i) depending on the nature of the game, DRIML-ada tends to repeat movement actions in navigation games and repeatedly fire in shooting games, and (ii) the value of $k$ tends to converge to 1 for games like Bigfish and Plunder as training progresses, which hints to an exploration-exploitation like trade-off. 
 
 Since many Procgen games do not have special actions such as fire or diagonal moves, DRIML-ada considers the actual actions (15 of them) and the visible actions (at most 15 of them) together in the adaptive lookahead selection algorithm.
\begin{figure}[h!]
    \centering
    \includegraphics[width=\linewidth]{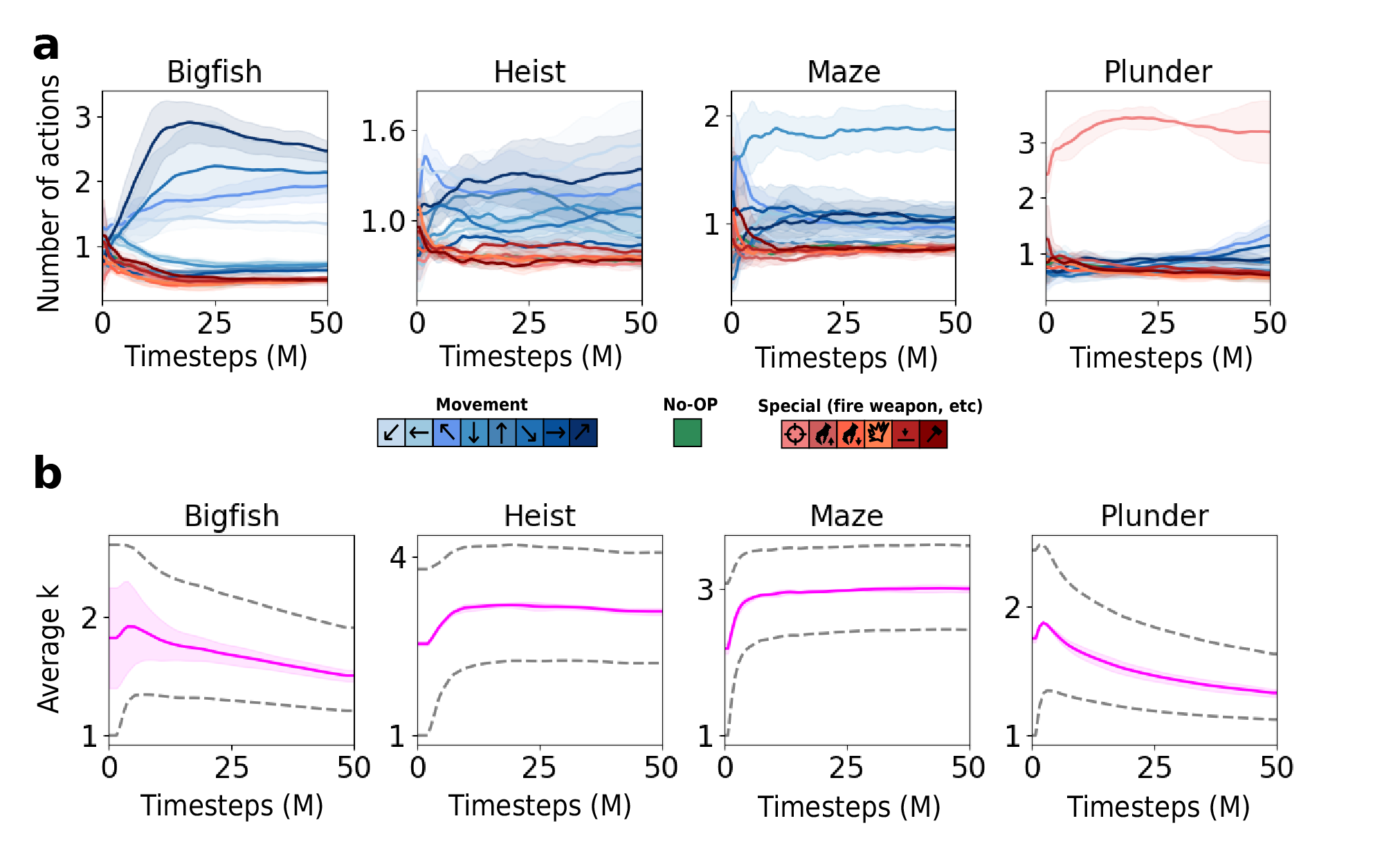}
    \caption{\textbf{(a)} Average number of movement, no-op and special actions taken by DRIML-ada on 4 Procgen games and \textbf{(b)} change in the average, max and min across batch values of $k$ as a function of training steps.}
    \label{fig:adaptive_procgen}
\end{figure}

\section{Experiments}
In this section, we first show how our proposed objective can be used to estimate state similarity in single Markov chains. We then show that DRIML can capture dynamics in locally deterministic systems (Ising model), which is useful in domains with partially deterministic transitions. We then provide results on a continual version of the Ms.~PacMan game where the DIM loss is shown to converge faster for more deterministic tasks, and to help in a continual learning setting. Finally, we provide results on Procgen~\citep{cobbe2019leveraging}, which show that DRIML performs well when trained on 500 levels with fixed order. All experimental details can be found in Appendix~\ref{sec:appendix_experiment_details}.

\subsection{DRIML learns a transition ratio model}
\label{sec:single_MC}
We first study the behaviour of DRIML's loss on a simple Markov chain describing a biased random walk in $\{1,\cdots,K\}$. The bias is specified by a single parameter $\alpha$. The agent starting at state $i$ transitions to $i+1$ with probability $\alpha$ and to $i-1$ otherwise. The agent stays in states $1$ and $K$ with probability $1-\alpha$ and $\alpha$, respectively. We encode the current and next states (represented as one-hots) using a 1-hidden layer MLP\footnote{The action is simply ignored in this setting.} (corresponding to $\Psi$ and $\Phi$ in~\eqref{eq:original_nce}), and then optimize the NCE loss $\cL_{DIM}$ (the scoring function $\phi$ is also 1-hidden layer MLP,~\eqref{eq:original_nce}) to maximize the MI between representations of successive states. Results are shown in Fig.~\ref{fig:fig1_random_walk_p}b, they are well aligned with the true transition matrix (Fig.~\ref{fig:fig1_random_walk_p}c). 

This toy experiment revealed an interesting observation: DRIML's objective involves computing the ratio of the Markov transition operator over the stationary distribution, implying that the convergence rate is affected by the \emph{spectral gap} of the average Markov transition operator, $\mat{T}^\pi_{ss'}=\mathbb{E}_{a \sim \pi(s,\cdot)}[T(s, a, s')]$ for transition operator $T$. That is, it depends on the difference between the two largest eigenvalues of $\mat{T}^\pi$, namely $1$ and $\lambda_{(2)}$.
In the case of the random walk, the spectral gap of its transition matrix can be computed in closed-form as a function of $\alpha$. Its lowest value is reached in the neighbourhood $\alpha=0.5$, corresponding to the point where the system is least predictable (as shown by the mutual information, Fig~\ref{fig:fig1_random_walk_p}c). However, since the matrix is not invertible for $\alpha=0.5$, we consider $\alpha=0.499$ instead. Derivations are available in Appendix~\ref{sec:appendix_single_mc}, and more insights on the connection to the spectral gap are presented in Appendix~\ref{sec:predictability_nce}.

\begin{figure*}[h!]
    \centering
    \includegraphics[width=\linewidth]{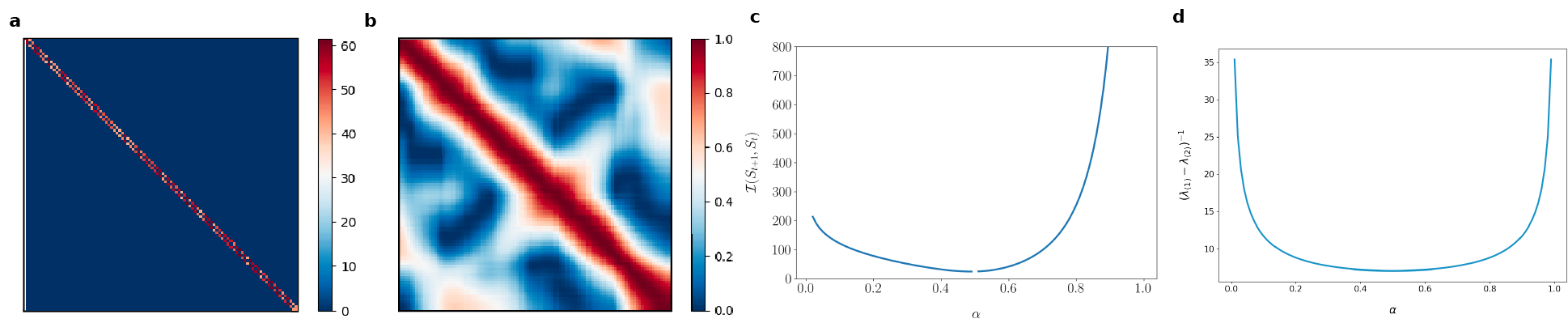}
    \caption{\textbf{(a)} Ratio of transition matrix over stationary vector for the random walk with $\alpha=0.499$, \textbf{(b)} the prediction matrix of being a pair of successive states learnt by $\cL_{DIM}$, \textbf{(c)} the closed-form mutual information between consecutive states in time as a function of $\alpha$ (with simplified endpoint conditions) and \textbf{(d)} the true inverse spectral gap $(\lambda_{(1)}-\lambda_{(2)})^{-1}$ as a function of $\alpha$.}
    \label{fig:fig1_random_walk_p}
\end{figure*}

\subsection{DRIML can capture complex partially deterministic dynamics}
The goal of this experiment is to highlight the predictive capabilities of our DIM objective in a partially deterministic system. We consider a dynamical system composed of $N\times N$ pixels with values in $\{-1,1\}$, $S(t) = \{s_{ij}(t) \mid 1 \leq i,j \leq N\}$. At the beginning of each episode, a patch corresponding to a quarter of the pixels is chosen at random in the grid. Pixels that do not belong to that patch evolve fully independently ($p(s_{ij}(t) = 1 \mid S(t-1)) = p(s_{ij}(t) = 1) = 0.5$). Pixels from the patch obey a local dependence law, in the form of a standard Ising model\footnote{\url{https://en.wikipedia.org/wiki/Ising_model}}: the value of a pixel at time $t$ only depends on the value of its neighbors at time $t-1$. This local dependence is obtained through a function $f$: $p(s_{ij}(t)|S(t-1)) = f(\{s_{i'j'}(t-1) \mid |i - i'| = |j - j'| = 1\})$ (see Appx~\ref{sec:appendix_ising_model} for details). Figure~\ref{fig:NCE_ising} shows the system at $t = 32$ during three different episodes (black pixels correspond to values of $-1$, white to $1$). The patches are very distinct from the noise. We then train a convolutional encoder using our DIM objective on local ``views'' only (see Section~\ref{sec:algo}). 

\begin{figure}[h]
    \centering
    \includegraphics[width=\linewidth]{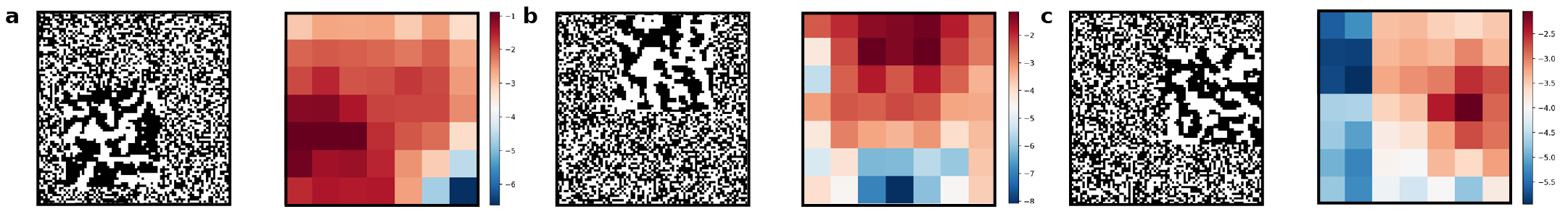}
    \caption{$42\times 42$ Ising model with temperature $\beta^{-1}=0.4$ overlaid onto a $84\times 84$ lattice of uniformly random spins $\{-1,+1\}$. The grayscale plots show each of the 3 systems at $t=32$; the color plots show the DIM similarity scores between $t=2$ and $t=3$. }
    \label{fig:NCE_ising}
\end{figure}

Figure~\ref{fig:NCE_ising} shows the similarity scores between the local features of states at $t=2$ and the same features at $t=3$ (a local feature corresponds to a specific location in the convolutional maps)\footnote{We chose early timesteps to make sure that the model does not simply detect large patches, but truly measures predictability.}. The heatmap regions containing the Ising model (larger-scale patterns) have higher scores than the noisy portions of the lattice. Local DIM is able to correctly encode regions of high temporal predictability.

\subsection{A continual learning experiment on Ms.~PacMan}
\label{sec:ms_pacman}
We further complicate the task of the Ising model prediction by building on top of the Ms.~PacMan game and introducing non-trivial dynamics. The environment is shown in the appendix. 

In order to assess how well our auxiliary objective captures predictability in this MDP, we define its dynamics such that $\mathbb{P}[\text{Ghost}_i\text{ takes a random move}]=\varepsilon$. Intuitively, as $\varepsilon\to 1$, the enemies' actions become less predictable, which in turn hinders the convergence rate of the contrastive loss. The four runs in Figure~\ref{fig:NCE_pacman}a correspond to various values of $\varepsilon$. We trained the agent using our $\cL_{DRIML}$ objective. We can see that the convergence of the auxiliary loss becomes slower with growing $\varepsilon$, as the model struggles to predict $s_{t+1}$ given $(s_t,a_t)$. After 100k frames, the NCE objective allows to separate the four MDPs according to their principal source of randomness (red and blue curves). When $\varepsilon$ is close to 1, the auxiliary loss has a harder time finding features that predict the next state, and eventually ignores the random movements of enemies.

The second and more interesting setup we consider consists in making only one out of 4 enemies lethal, and changing which one every 5k episodes. Figure~\ref{fig:NCE_pacman}b shows that, as training progresses, the blue curve (C51) always reaches the same performance at the end of the 5k episodes, while DRIML's steadily increases. The blue agent learns to ignore the harmless ghosts (they have no connection to the reward signal) and has to learn the task from scratch every time the lethal ghost changes. On the other hand, the DRIML agent (red curve) is incentivized to encode information about all the predictable objects on the screen (including the harmless ghosts), and as such adapts faster and faster to changes. Figure~\ref{fig:NCE_pacman}c shows the same PacMan environment with a quasi-deterministic Ising model evolving in the walled areas of the screen (details in appendix). For computational efficiency, we only run this experiment for 10k episodes. As before, DRIML outperforms C51 after the lethal ghost change, demonstrating that its representations encode more information about the dynamics of the environment (in particular about the harmless ghosts). The presence of additional distractors - the Ising model in the walls - did not impact that observation.

\begin{figure}[h!]
    \centering
    \includegraphics[width=\linewidth]{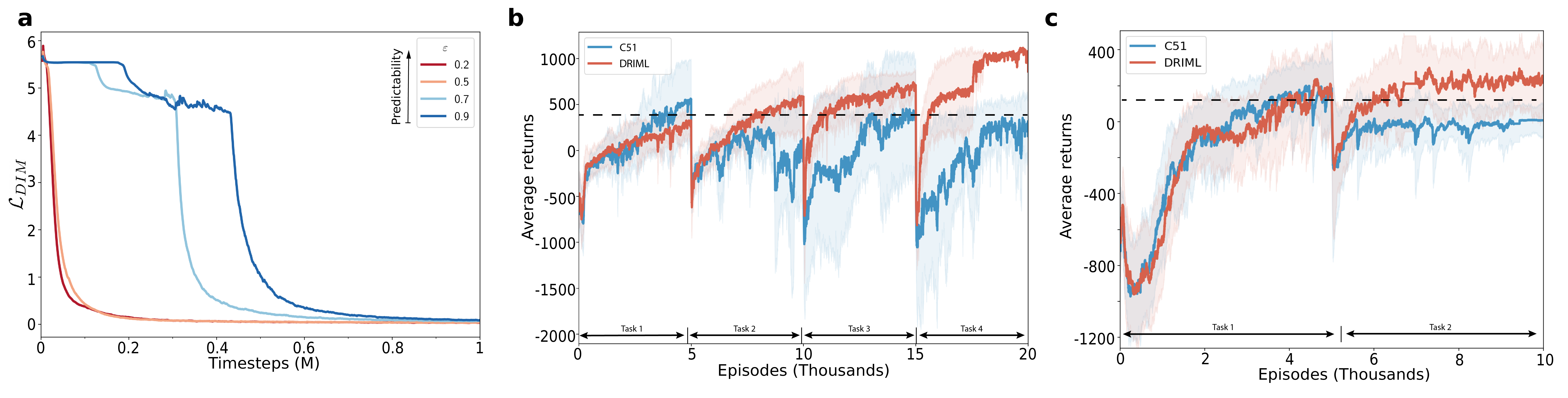}
    \caption{\textbf{(a)} average training NCE loss for various values of $\varepsilon$ as a function of timesteps, \textbf{(b)} average training reward with only one harmful enemy per level (dashed line indicates average terminal blue curve performance after each task) and \textbf{(c)} average training reward on PacMan + Ising noise in walled areas.}
    \label{fig:NCE_pacman}
\end{figure}

\subsection{Performance on Procgen Benchmark}
\label{sec:procgen_results}
Finally, we demonstrate the beneficial impact of adding a DIM-like objective to C51 (DRIML) on the 500 first levels of all 16 Procgen tasks~\citep{cobbe2019leveraging}. All algorithms are trained for 50M environment frames with the DQN~\citep{mnih2015human} architecture. The mean and standard deviation of the scores (over 3 seeds) are shown in Table~\ref{tab:procgen}; bold values indicate best performance. 

\begin{table}[ht]
\centering
\caption{Average training returns collected after 50M of training frames, $\pm$ one standard deviation.}
\resizebox{\linewidth}{!}{%
\begin{tabular}{l||l|ll|llll}
\toprule
Env & C51 & CPC-1$\rightarrow$ 5 & CURL & DRIML-noact & DRIML-randk & DRIML-fix & DRIML-ada\\ \midrule
bigfish    & $1.33\pm0.12$ & $1.17\pm0.16$ & $2.70\pm1.30$ & $1.19\pm0.04$ & $1.12\pm1.03$ & $2.02\pm0.18$ & \textbf{4.45} $\pm$ \textbf{0.71}\\
bossfight  & 0.57$\pm$0.05 & 0.52$\pm$0.07 & 0.60$\pm$0.06 & 0.47$\pm$0.01 & 0.56$\pm$0.03 & 0.67$\pm$0.02 & \textbf{1.05}$\pm$\textbf{0.19}\\
caveflyer  & 9.19$\pm$0.29 & 6.40$\pm$0.56 & 6.94$\pm$0.25 & 8.26$\pm$0.26 & 7.92$\pm$0.15 & \textbf{10.2}$\pm$\textbf{0.41} & 6.77$\pm$0.04\\
chaser     & 0.22$\pm$0.04 & 0.21$\pm$0.02 & 0.35$\pm$0.04 & 0.23$\pm$0.02 & 0.26$\pm$0.01 & 0.29$\pm$0.02 & \textbf{0.38}$\pm$\textbf{0.04}\\
climber    & 1.68$\pm$0.10 & 1.71$\pm$0.11 & 1.75$\pm$0.09 & 1.57$\pm$0.01 & 2.21$\pm$0.48 & \textbf{2.26}$\pm$\textbf{0.05} & 2.20$\pm$0.08\\
coinrun    & \textbf{29.7}$\pm$\textbf{5.44} & 11.4$\pm$1.55 & 21.2$\pm$1.94 & 13.2$\pm$1.21 & 21.6$\pm$1.97 & 27.2$\pm$1.92 & 22.88$\pm$0.4\\
dodgeball  & 1.20$\pm$0.08 & 1.05$\pm$0.04 & 1.09$\pm$0.04 & 1.22$\pm$0.04 & 1.19$\pm$0.03 & 1.28$\pm$0.02 & \textbf{1.44}$\pm$\textbf{0.06}\\
fruitbot   & 3.86$\pm$0.96 & 4.56$\pm$0.93 & 4.89$\pm$0.71 & 5.42$\pm$1.33 & 6.84$\pm$0.24 & 5.40$\pm$1.02 & \textbf{9.53}$\pm$\textbf{0.29}\\
heist      & 1.54$\pm$0.10 & 0.93$\pm$0.08 & 1.06$\pm$0.05 & 1.04$\pm$0.02 & 1.00$\pm$0.05 & 1.30$\pm$0.05 & \textbf{1.89}$\pm$\textbf{0.02}\\
jumper     & \textbf{13.2$\pm$0.83} & 2.28$\pm$0.44 & 10.3$\pm$0.61 & 4.31$\pm$0.64 & 5.62$\pm$0.27 & 12.6$\pm$0.64 & 12.2$\pm$0.42\\
leaper     & 5.03$\pm$0.14 & 4.01$\pm$0.71 & 3.94$\pm$0.46 & 5.40$\pm$0.09 & 4.24$\pm$1.17 & 6.17$\pm$0.29 & \textbf{6.35}$\pm$\textbf{0.46}\\
maze       & 2.36$\pm$0.09 & 1.14$\pm$0.08 & 0.82$\pm$0.20 & 1.44$\pm$0.26 & 1.18$\pm$0.03 & 1.38$\pm$0.08 & \textbf{2.62}$\pm$\textbf{0.10}\\
miner      & 0.13$\pm$0.01 & 0.13$\pm$0.02 & 0.10$\pm$0.01 & 0.12$\pm$0.01 & 0.15$\pm$0.01 & 0.14$\pm$0.01 & \textbf{0.19}$\pm$\textbf{0.02}\\
ninja      & \textbf{9.36}$\pm$\textbf{0.01} & 6.23$\pm$0.82 & 5.84$\pm$1.21 & 6.44$\pm$0.22 & 8.13$\pm$0.26 & 9.21$\pm$0.25 & 8.74$\pm$0.28\\
plunder    & 2.99$\pm$0.07 & 3.00$\pm$0.06 & 2.77$\pm$0.14 & 3.20$\pm$0.05 & 3.34$\pm$0.09 & 3.37$\pm$0.17 & \textbf{3.58}$\pm$\textbf{0.04}\\
starpilot  & 2.44$\pm$0.12 & 2.87$\pm$0.05 & 2.68$\pm$0.09 & 3.70$\pm$0.30 & 3.93$\pm$0.04 & \textbf{4.56}$\pm$\textbf{0.21} & 2.63$\pm$0.16\\ \midrule
Norm.score & $1.0$ & $0.23$ & $0.52$ & $0.59$ & $0.92$ & $1.48$ & $1.9$ \\ \bottomrule
\end{tabular}%
}
\label{tab:procgen}
\end{table}%
Similarly to CURL, we used data augmentation on inputs to DRIML-fix to improve the model's predictive capabilities in fast-paced environments (see App.~\ref{sec:appendix_procgen}). While we used the global-global loss in DRIML's objective for all Procgen games, we have found that the local-local loss also had a beneficial effect on performance on a smaller set of games (e.g. starpilot, which has few moving entities on a dark background).

\section{Discussion}
In this paper, we introduced an auxiliary objective called Deep Reinforcement and InfoMax Learning (DRIML), which is based on maximizing concordance of state-action pairs with future states (at the representation level). We presented results showing that 1) DRIML implicitly learns a transition model by boosting state similarity, 2) it can improve performance of deep RL agents in a continual learning setting and 3) it boosts training performance in complex domains such as Procgen.

\section*{Acknowledgements}
We thank Harm van Seijen, Ankesh Anand, Mehdi Fatemi, Romain Laroche and Jayakumar Subramanian for useful feedback and helpful discussions.
\section*{Broader Impact}
This work proposes an auxiliary objective for model-free reinforcement learning agents. The objective shows improvements in a continual learning setting, as well as on average training rewards for a suite of complex video games. While the objective is developed in a visual setting, maximizing mutual information between features is a method that can be transported to other domains, such as text. Potential applications of deep reinforcement learning are (among others) healthcare, dialog systems, crop management, robotics, etc. Developing methods that are more robust to changes in the environment, and/or perform better in a continual learning setting can lead to improvements in those various applications. 
At the same time, our method fundamentally relies on deep learning tools and architectures, which are hard to interpret and prone to failures yet to be perfectly understood. Additionally, deep reinforcement learning also lacks formal performance guarantees, and so do deep reinforcement learning agents. Overall, it is essential to design failsafes when deploying such agents (including ours) in the real world.


\clearpage

\section{Appendix}
\subsection{Markov Chains}\label{sub:mc}
Given a discrete state space $\cS$ with probability measure $\mathbb{T}$, a discrete-time homogeneous Markov chain (MC) is a collection of random variables with the following property on its \textit{transition matrix} $\mat{T} \in \mathbb{R}^{|\cS| \times |\cS|}$: $\mat{T}_{s s'}=\mathbb{P}[S_{t+1} = s'|S_{t}=s],\;\forall s,s'\in \cS, \forall t \geq 0$. Assuming the Markov chain is ergodic, its \textit{invariant distribution}\footnote{The existence and uniqueness of $\vec{\rho}$ are direct results of the Perron-Frobenius theorem.} $\vec{\rho}$ is the principal eigenvector of $\mat{T}$, which verifies $\vec{\rho} \mat{T}=\vec{\rho}$ and summarizes the long-term behaviour of the chain. We define the marginal distribution of $S_t$ as $p_t(s):=\mathbb{P}[S_t=s]=\mat{T}^t_{s:}$,
and the initial distribution of $S$ as $p_0(s)$.

\subsection{Markov Decision Processes}

A discrete-time, finite-horizon Markov Decision Process~\citep[MDP]{bellman1957markovian,puterman2014markov}
comprises a state space $\cS$, an action space\footnote{We consider discrete state and action spaces.} $\cA$, a transition kernel $T:\cS\times \cA \times \cS\mapsto [0,1]$, a reward function $r:\cS\times \cA\mapsto \Real$ and a discount factor $\gamma\in[0,1]$.
At every timestep $t$, an agent interacting with this MDP observes the current state $s_t\in\cS$, selects an action $a_t\in\cA$, and observes a reward $r(s_t, a_t) \in\Real$ upon transitioning to a new state $s_{t+1}\sim T(s_t, a_t,\cdot)$.
The goal of an agent in a discounted MDP is to learn a policy $\pi:\cS\times\cA\mapsto[0,1]$ such that taking actions $a_t\sim\pi(\cdot|s_t)$ maximizes the expected sum of discounted returns,
\begin{align*}
V^\pi(s) = \mathbb{E}_\pi \bigg[\sum_{t=0}^\infty \gamma^t r(s_t, a_t) | s_0 = s\bigg].
\end{align*}

To convert a MDP into a MC, one can let $\mat{T}^\pi_{ss'}=\mathbb{E}_{a \sim \pi(s,\cdot)}[T(s, a, s')]$, an operation which can be easily tensorized for computational efficiency in small state spaces~\citep[see][]{mazoure2020provably}.

\subsection{Link to invariant distribution}
For a discrete state ergodic Markov chain specified by $\mat{P}$ and initial occupancy vector $\vec{p}_0$, its marginal state distribution at time $t$ is given by the Chapman-Kolmogorov form:
\begin{equation}
    \mathbb{P}[S_t=s]=\vec{p}_0\mat{P}^t_{\cdot s},
\end{equation}
and its \textit{limiting distribution} $\vec{\sigma}$ is the infinite-time marginal
\begin{equation}
    \lim_{t\to \infty} \mat{P}^{(t)}_{ss'}=\vec{\sigma}_{s'}, \; s,s' \in \cS
\end{equation}
which, if it exists, is exactly equal to the \textit{invariant distribution} $\vec{\rho}$.

For the very restricted family of ergodic MDPs under fixed policy $\pi$, we can assume that $p_t$ converges to a time invariant distribution $\rho$.

Therefore,
\begin{align}
\begin{split}
    \mathcal{I}_t(S,S')&=\sum_{s'\in \cS}\sum_{s\in\cS}p_0(\mat{P}^t)_{: s}\mat{P}_{ss'}\bigg(\log \mat{P}_{ss'} - \log\{p_0(\mat{P}^{t+1})_{: s'}\}\bigg)\\
\end{split}
\end{align}
Now, observe that $\cI_t$ is closely linked to $T/\rho$ when samples come from timesteps close to $t_{mix}(\varepsilon)$. That is, interchanging swapping $\rho(s)$ and $p_t(s)$ at any state $s$ would yield at most $\delta(t)$ error. Moreover, existing results~\citep{levin2017markov} from Markov chain theory provide bounds on $||(\mat{P}^{t+1})_{s:}-(\mat{P}^t)_{s:}||_{TV}$ depending on the structure of the transition matrix.

If $\mat{P}$ has a limiting distribution $\vec{\sigma}$, then using the dominated convergence theorem allows to replace matrix powers by $\vec{\sigma}$, which is then replaced by the invariant distribution $\vec{\rho}$:
\begin{equation}
\begin{split}
     \lim_{t\to \infty}\cI_t &= \sum_{s'\in \cS}\sum_{s\in\cS}\vec{\rho}_{s}\vec{\rho}_{s'}\bigg(\log \mat{P}_{ss'} - \log\vec{\rho}_{s'}\bigg)\\
     &= \mathbb{E}_{\rho \times \rho}\bigg(\log \mat{P}_{ss'} -\log\vec{\rho}_{ s'} \bigg)
     \label{eq:limiting_pmi}
\end{split}
\end{equation}

Of course, most real-life Markov decision processes do not actually have an invariant distribution since they have absorbing (or terminal) states. In this case, as the agent interacts with the environment, the DIM estimate of MI yields a rate of convergence which can be estimated based on the spectrum of $\mat{P}$.

Moreover, one could argue that since, in practice, we use off-policy algorithms for this sort of task, the gradient signal comes from various timesteps within the experience replay, which drives the model to learn features that are consistently predictive through time.

\subsection{Predictability and Contrastive Learning}
\label{sec:predictability_nce}
Information maximization has long been considered one of the standard principles for measuring correlation and performing feature selection~\citep{song2012comparison}. In the MDP context, high values of $\mathcal{I}([S_t,A_t],S_{t+k})$ indicate that $(S_t,A_t)$ and $S_{t+k}$ have some form of dependence, while low values suggest independence. The fact that predictability (or more precisely determinism) in Markov systems is linked to 
the MI suggests a deeper connection to the spectrum of the transition kernel $T$. For instance, the set of eigenvalues of $T$ for a Markov decision process contains important information about the connectivity of said process, such as mixing time or number of densely connected clusters~\citep{von2007tutorial,levin2017markov}. 

Consider the setting in which $\pi$ is fixed at some iteration in the optimization process. In the rest of this section, we let $\mat{T}$ denote the expected transition model $\mat{T}(s,s')=\mathbb{E}_\pi[T(s,a,s')]$ (it is a Markov chain). We let $\nu_t(s,s') = \frac{\mat{T}(s,s')}{p_{t+1}(s')}$ be the ratio learnt when optimizing the infoNCE loss on samples drawn from the random variables $S_t$ and $S_{t+1}$ (for a fixed $t$)~\citep{oord2018representation}. We also let $\nu_\infty(s,s') = \frac{\mat{T}(s,s')}{\vec{\rho}(s')}$ be that ratio when the Markov chain has reached its stationary distribution $\vec{\rho}$ (see Section~\ref{sub:mc}), and $\tilde{\nu}_t(s,s')$ be the scoring function learnt using InfoNCE (which converges to $\nu_t(s,s')$ in the limit of infinite samples drawn from $(S_t,S_{t+1})$).

\begin{proposition}

Let $0 < \epsilon \leq 1$. Assume at time step $t$, training of $\tilde{\nu}_t$ has close to converged on a pair $(s,s')$, i.e. $|\nu_t(s,s') - \tilde{\nu}_t(s,s')| < \epsilon$. Then the following holds:

\begin{equation}
    t \geq t_{mix}\bigg(\frac{\epsilon}{2} \min_{x}\vec{\rho}(x)^2\bigg) \quad \implies \quad \bigg|\nu_t(s,s') - \nu_{\infty}(s,s')\bigg| \leq 2 \epsilon.
\end{equation}

\label{prop:nu_ratio}
\end{proposition}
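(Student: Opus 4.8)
The plan is to split the argument into (i) an exact identity for the gap $\nu_t(s,s') - \nu_{\infty}(s,s')$ and (ii) a quantitative mixing bound on how far the time-$(t{+}1)$ marginal $p_{t+1}$ is from the stationary distribution $\vec{\rho}$. First I would observe that, since $\nu_t$ and $\nu_{\infty}$ share the same numerator $\mat{T}(s,s')$,
\begin{equation}
\nu_t(s,s') - \nu_{\infty}(s,s') \;=\; \mat{T}(s,s')\,\frac{\vec{\rho}(s') - p_{t+1}(s')}{p_{t+1}(s')\,\vec{\rho}(s')},
\end{equation}
so, using $\mat{T}(s,s') \le 1$, it is enough to (a) make $|p_{t+1}(s') - \vec{\rho}(s')|$ small and (b) keep $p_{t+1}(s')$ bounded away from $0$, and then divide.

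For (a), I would use that $p_{t+1} = p_0\mat{T}^{t+1}$, $\vec{\rho}\mat{T} = \vec{\rho}$ and $\sum_x p_0(x) = 1$ to write $p_{t+1} - \vec{\rho} = \sum_x p_0(x)\big(\mat{T}^{t+1}(x,\cdot) - \vec{\rho}\big)$; the triangle inequality then gives $\|p_{t+1} - \vec{\rho}\|_{TV} \le \max_x \|\mat{T}^{t+1}(x,\cdot) - \vec{\rho}\|_{TV} = d(t+1) \le d(t)$, where $d(\cdot)$ is the worst-case-over-starting-states distance to stationarity, which is non-increasing~\citep{levin2017markov}. The hypothesis $t \ge t_{mix}\!\left(\tfrac{\epsilon}{2}\min_x \vec{\rho}(x)^2\right)$ then yields $\|p_{t+1} - \vec{\rho}\|_{TV} \le \tfrac{\epsilon}{2}\min_x \vec{\rho}(x)^2$, hence (bounding a single-coordinate difference by $2\|\cdot\|_{TV}$) $|p_{t+1}(s') - \vec{\rho}(s')| \le \epsilon\min_x \vec{\rho}(x)^2$.

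For (b), write $m := \min_x \vec{\rho}(x) > 0$ (strictly positive by ergodicity). From (a), $p_{t+1}(s') \ge \vec{\rho}(s') - \epsilon m^2 \ge m - \epsilon m^2 = m(1 - \epsilon m) \ge m/2$, since $\epsilon \le 1$ and $m \le 1/|\cS| \le 1/2$. Plugging (a) and (b) into the displayed identity, together with $\vec{\rho}(s') \ge m$, gives $|\nu_t(s,s') - \nu_{\infty}(s,s')| \le \epsilon m^2 \big/ \big((m/2)\, m\big) = 2\epsilon$, which is the claim. (If one wants the conclusion phrased for the learned score rather than the idealized ratio, combining this with the near-convergence assumption $|\nu_t(s,s') - \tilde{\nu}_t(s,s')| < \epsilon$ and the triangle inequality gives a corresponding bound on $|\tilde{\nu}_t(s,s') - \nu_{\infty}(s,s')|$.)

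I expect the main obstacle to be bookkeeping rather than depth: being careful that the marginal appearing in $\nu_t$ is $p_{t+1}$ and not $p_t$ (this is why $d(t+1)$ shows up, which is harmless by monotonicity of $d$); using the variant of $t_{mix}$ that maximizes over starting states so that the bound is valid for an arbitrary initial distribution $p_0$; and checking that the argument $\tfrac{\epsilon}{2}\min_x \vec{\rho}(x)^2$ fed to $t_{mix}$ is exactly what is needed to cancel the two factors $p_{t+1}(s')^{-1}$ and $\vec{\rho}(s')^{-1}$ while still leaving $p_{t+1}(s') \ge m/2$. Everything else is the identity above together with the standard fact that $d(t)$ dominates $\|p_{t+1} - \vec{\rho}\|_{TV}$.
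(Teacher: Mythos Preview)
Your proposal is correct and follows essentially the same route as the paper: write $\nu_t-\nu_\infty$ as $\mat{T}(s,s')\,\dfrac{\vec\rho(s')-p_{t+1}(s')}{p_{t+1}(s')\,\vec\rho(s')}$, use the mixing-time hypothesis to bound the numerator, and lower-bound $p_{t+1}(s')$ by $\tfrac{1}{2}\min_x\vec\rho(x)$. Two small remarks: (i) the factor $2$ you insert when passing from $\|\cdot\|_{TV}$ to a single-coordinate difference is not needed (for the sup-over-events convention $|p_{t+1}(s')-\vec\rho(s')|\le\|p_{t+1}-\vec\rho\|_{TV}$ already), so one in fact gets $|\nu_t-\nu_\infty|\le\epsilon$; (ii) the paper's proof uses that extra $\epsilon$ together with the near-convergence assumption to bound $|\tilde\nu_t-\nu_\infty|\le 2\epsilon$, which is exactly the variant you mention in your parenthetical.
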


\begin{proof}
Let us consider fixed $0 < \epsilon \leq 1$, $(s,s')$ and $t \geq t_{mix}(\frac{\epsilon}{2} \min_{x}\vec{\rho}(x)^2)$.
First, since $t_{mix}(\frac{\epsilon}{2} \min_{x}\vec{\rho}(x)^2) \geq t_{mix}(\frac{\min_{x}\vec{\rho}(x)}{2})$, we have
\begin{equation*}
    \quad |p^*_{t+1}(s') - \vec{\rho}(s')| \leq \frac{\min_{x}\vec{\rho}(x)}{2}.
\end{equation*}
Or in other terms: $p^*_{t+1}(s') \geq \frac{\min_{x}\vec{\rho}(x)}{2}$.
Now, we have:
\begin{align*}
    |\tilde{\nu}_t(s,s') - \nu_{\infty}(s,s')| &\leq |\tilde{\nu}_t(s,s') - \nu_t(s,s')| + |\nu_t(s,s') - \nu_{\infty}(s,s')| \\
    &\leq \epsilon + \bigg|\frac{\mat{T}(s,s')}{p_{t+1}(s')} - \frac{\mat{T}(s,s')}{\vec{\rho}(s')}\bigg| \\
    &\leq \epsilon + \frac{|p_{t+1}(s') - \vec{\rho}(s')|}{p_{t+1}(s') \vec{\rho}(s')} \\
    &\leq \epsilon + \frac{|p_{t+1}(s') - \vec{\rho}(s')|}{\frac{\min_{x}\vec{\rho}(x)}{2} \min_{x}\vec{\rho}(x)}.
\end{align*}
By assumption on $t$, we know that $|p_{t+1}(s') - \vec{\rho}(s')| \leq \frac{\epsilon}{2} \min_{x}\vec{\rho}(x)^2$, which concludes the proof.
\end{proof}

Proposition~\ref{prop:nu_ratio} in conjunction with the result on Markov chain mixing times from \cite{levin2017markov} suggests that faster convergence of $\tilde{\nu}_t$ to $\nu_\infty$ happens when the \textit{spectral gap} $1-\lambda_{(2)}$ of $T$ is large, or equivalently when $\lambda_{(2)}$ is small. It follows that, on one hand, mutual information is a natural measure of concordance of $(s,s')$ pairs and can be maximized using data-efficient, batched gradient methods. On the other hand, the rate at which the InfoNCE loss converges to its stationary value (ie maximizes the lower bound on MI) depends on the spectral gap of $T$, which is closely linked to predictability. This relation holds in very simple domains like the Markov Chains which were presented across the paper, but for now, there is no reliable way to estimate the second eigenvalue of the MDP transition operator under nonlinear function approximation that we are aware of.

\subsection{Code snippet for DIM objective scores}
The following snippet yields pointwise (i.e. not contracted) scores given a batch of data.
\label{code:dim_batch_scores}
\begin{minted}[mathescape,
               linenos,
               numbersep=5pt,
               gobble=2,
               frame=lines,
               framesep=2mm]{python}
               
    def temporal_DIM_scores(reference,positive,clip_val=20):
        """
        reference: n_batch × n_rkhs × n_locs
        positive: n_batch x n_rkhs x n_locs
        """
        reference = reference.permute(2,0,1)
        positive = positive.permute(2,1,0)
        # reference: n_loc × n_batch × n_rkhs
        # positive: n_locs × n_rkhs × n_batch
        pairs = torch.matmul(reference, positive) 
        # pairs: n_locs × n_batch × n_batch
        pairs = pairs / reference.shape[2]**0.5
        pairs = clip_val * torch.tanh((1. / clip_val) * pairs)
        shape = pairs.shape
        scores = F.log_softmax(pairs, 2)
        # scores: n_locs × n_batch × n_batch
        mask = torch.eye(shape[2]).unsqueeze(0).repeat(shape[0],1,1)
        # mask: n_locs × n_batch × n_batch
        scores = scores * mask
        # scores: n_locs × n_batch × n_batch
        return scores
\end{minted}

To obtain a scalar out of this batch, sum over the third dimension and then average over the first two.

\subsection{Experiment details}
\label{sec:appendix_experiment_details}
All experiments involving RGB inputs (Ising, Ms.PacMan and Procgen) were ran with the settings shown in Table~\ref{tab:appendix_experiment_params}. Parameters such as gradient clipping and n-step-returns were kept from the codebase, `rlpyt`, since it was observed that they helped achieve a more stable convergence.
\begin{table}[h]
    \centering
    \begin{tabular}{c|l|c}
        Name & Description & Value \\
        \hline 
       $\varepsilon_{T_{exploration}}$  & Exploration at $t=0$ & 0.1\\
       $\varepsilon_{T_{exploration}}$ & Exploration at $t=T_{exploration}$ & 0.01\\
       $T_{exploration}$ & Exploration decay & $10^5$\\
       LR & Learning rate & $2.5\times 10^{-4}$\\
       $\gamma$ & Discount factor & $0.99$\\
       Clip grad & Gradient clip norm & $10$\\
       N-step-return & N-step return & $7$\\
       \multirow{2}{*}{Frame stack} & \multirow{2}{*}{Number of stacked frames} & $1$ (Ising and Procgen)\\
       & & $4$ (Ms.PacMan)\\
       Grayscale & Grayscale or RGB & RGB\\
       \multirow{2}{*}{Input size} & \multirow{2}{*}{State input size} & $84\times 84$ (Ising and Ms.PacMan)\\
       & & $80 \times 104$ (Procgen)\\
       $T_{warmup}$ & Warmup steps & 1000\\
       Replay size & Size of replay buffer & $10^6$\\
       $\tau$ & Target soft update coeff & $0.95$\\
       Clip reward & Reward clipping & False\\
       \multirow{2}{*}{$\lambda_{4t4}$} & \multirow{2}{*}{Global-global DIM} & 1 (Ms.PacMan and Procgen)\\
       & & 0 (Ising)\\
       \multirow{2}{*}{$\lambda_{3t3}$} & \multirow{2}{*}{Local-local DIM} & 1 (Ms.PacMan and Ising)\\
       & & 0 (Procgen)\\
       $\lambda_{3t4}$ & Local-global DIM & 0\\
       $\lambda_{4t3}$ & Global-local DIM & 0\\
        \multirow{2}{*}{$k$} &  \multirow{2}{*}{DIM lookahead constant} & 1 (Ising and Ms.PacMan)\\
       & & Variable between 1 and 5 (Procgen)
    \end{tabular}
    \caption{Experiments' parameters}
    \label{tab:appendix_experiment_params}
\end{table}

The global DIM heads consist of a standard single hidden fully-connected layer network of 512 with ReLU activations and a skip-connection from input to output layers. The action is transformed into one-hot and then encoded using a 64 unit layer, after which it is concatenated with the state and passed to the global DIM head.

The local DIM heads consist of a single hidden layer network made of $1\times 1$ convolution. The action is tiled to match the shape of the convolutions, encoded using a $1\times 1$ convolutions and concatenated along the feature dimension with the state, after which is is passed to the local DIM head.

In the case of the Ising model, there is no decision component and hence no concatenation of state and action representations is required.

\subsubsection{AMI of a biased random walk}
\label{sec:appendix_single_mc}
We see from the formulation of the mutual information objective $\mathcal{I}(S_{t+1},S_t)$ that it inherently depends on the ratio of $\mat{P}/\rho$. Recall that, for a 1-d random walk on integers $[0,N)$, the stationary distribution is a function of $\frac{\alpha}{1-\alpha}$ and can be found using the recursion $\mathbb{P}[S=i]=\alpha\mathbb{P}[S=i-1]+(1-\alpha)\mathbb{P}[S=i+1]$. It has the form
\begin{equation}
    \rho(i)=\mathbb{P}[S=i]=r^i(1-r)(1-r^{N})^{-1},\;\; i \in [0,N),
\end{equation}
for $r=\frac{\alpha}{1-\alpha}$.

The pointwise mutual information between states $S_t$ and $S_{t+1}$ is therefore the random variable
\begin{equation}
\begin{split}
    \dot{\cI}(S_{t+1},S_t)&=\log \mathbb{P}[S_{t+1}|S_t]-\log \mathbb{P}[S_{t+1}]\\
    &=\log \alpha ^{\mathbbm{1}_{(>0)}(S_{t+1}-S_t)}+\log (1-\alpha)^{\mathbbm{1}_{(<0)}(S_{t+1}-S_t)}-\log \rho(S_{t+1})
\end{split}
\end{equation}
with expectation equal to the average mutual information which we can find by maximizing, among others, the InfoNCE bound. We can then compute the AMI as a function of $\alpha$
\begin{equation}
    \cI(S_{t+1},S_t;\alpha)=\sum_{i=0}^N\sum_{j=0}^N\dot{\cI}(j,i) \alpha ^{\mathbbm{1}_{(>0)}(j-i)} (1-\alpha)^{\mathbbm{1}_{(<0)}(j-i)} \rho(i),
\end{equation}
which is shown in Figure~\ref{fig:fig1_random_walk_p}c.

The figures were obtained by training the global DIM objective $\Phi_4$ on samples from the chain for 1,000 epochs with learning rate $10^{-3}$.

\subsubsection{Ising model}
\label{sec:appendix_ising_model}
 We start by generating an $84 \times 84$ rectangular lattice which is filled with Rademacher random variables $v_{1,1},..,v_{84,84}$; that is, taking $-1$ or $1$ with some probability $p$. For any $p\in (0,1)$, the joint distribution $p(v_{1,1},..,v_{84,84})$ factors into the product of marginals $p(v_{1,1})..p(v_{84,84})$.

At every timestep, we uniformly sample a random index tuple $(i,j),21\leq i,j\leq 63$ and evolve the set of nodes $\vec{v}=\{v_{k,l}:i-21\leq k \leq i+21,j-21\leq l \leq j+21\}$ according to an Ising model with temperature $\beta^{-1}=0.4$, while the remaining nodes continue to independently take the values $\{-1,1\}$ with equal probability. If one examines any subset of nodes outside of $\vec{v}$, then the information conserved across timesteps would be close to 0, due to observations being independent in time.

However, examining a subset of $\vec{v}$ at timestep $t$ allows models based on mutual information maximization to predict the configuration of the system at $t+1$, since this region has high mutual information across time due to the ratio $\frac{\mat{T}(v,v')}{p_{t+1}(v')}$ being directly proportional to the temperature parameter $\beta^{-1}$.

To obtain the figure, we trained local DIM $\Phi_3$ on sample snapshots of the Ising model as $84\times84$ grayscale images for 10 epochs. The local DIM scores were obtained by feeding a snapshot of the Ising model at $t=3$; showing it snapshots from later timestep would've made the task much easier since there would be a clear difference in granularities of the random pattern and Ising models.

\subsubsection{Ms.PacMan}
\begin{figure}[h!]
    \centering
    \includegraphics[width=0.4\linewidth]{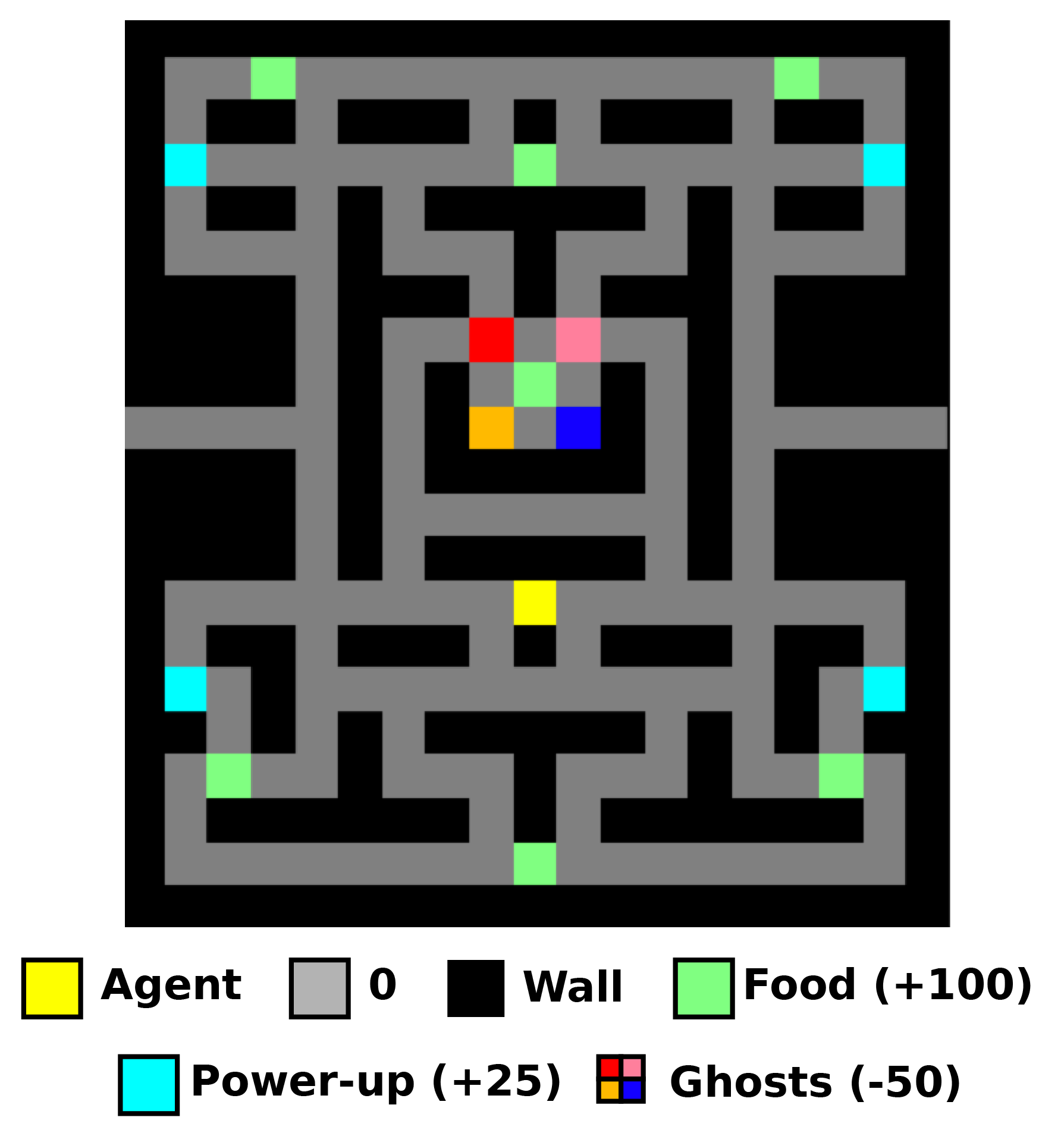}
    \caption{The simplified Ms.PacMan environment}
    \label{fig:pacman_env}
\end{figure}
In PacMan, the agent, represented by a yellow square, must collect food pellets while avoiding four harmful ghosts. When the agent collects one of the boosts, it becomes invincible for 10 steps, allowing it to destroy the enemies without dying. In their turn, ghosts alternate between three behaviours: 1) when the agent is not within line-of-sight, wander randomly, 2) when the agent is visible and does not have a boost, follow them and 3) when the agent is visible and has a boost, avoid them. The switch between these three modes happens stochastically and quasi-independently for all four ghosts. Since the food and boost pellets are fixed at the beginning of each episode, randomness in the MDP comes from the ghosts as well as the agent's actions. 

The setup for our first experiment in the domain is as follows: with a fixed probability $\varepsilon$, each of the 4 enemies take a random action instead of following one of the three movement patterns.

The setup for our second experiment in the domain consists of four levels: in each level, only one out of the four ghosts is lethal - the remaining three behave the same but do not cause damage. The model trains for 5,000 episodes on level 1, then switches to level 3, then level 3 and so forth.
This specific environment tests for the ability of DIM to quickly figure out which of the four enemies is the lethal one and ignore the remaining three based on color .

For our study, the state space consisted of $21 \times 19 \times 3$ RGB images. The inputs to the model were states re-scaled to $42 \times 38 \times 12$ by stacking 4 consecutive frames, which were then concatenated with actions using an embedding layer.

The third experiment consisted in overlaying the Ising model from the above section onto walls in the Ms.PacMan game. Every rollout, the Ising model was reset to some (random) initial configuration and allowed to evolve until termination of the episode. The color of the Ising distractor features was chosen to be fuchsia.

\begin{figure}[h!]
    \centering
    \includegraphics[width=0.5\linewidth]{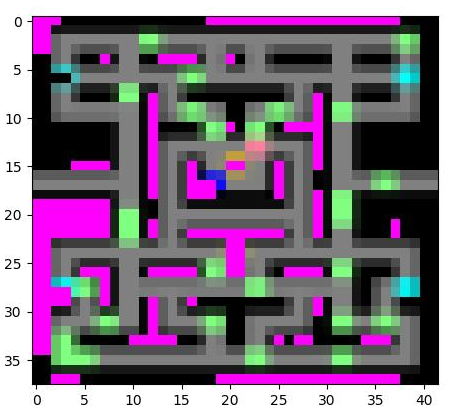}
    \caption{Upsampled screen cap of the Ms.PacMan task with Ising distractor features (in fuchsia).}
    \label{fig:pacman_ising}
\end{figure}

\subsubsection{Procgen}
\label{sec:appendix_procgen}

The training setting consists in fixing the first 500 levels of a given Procgen game, and train all algorithms on these 500 levels in that specific order. Since we use the Nature architecture of DQN rather than IMPALA (due to computational restrictions), our results can be different from other Procgen baselines.

The data augmentation was tried only for DRIML-fix - DRIML-ada seems to perform well without data augmentation. The data augmentation steps performed on $S_t$ and $S_{t+k}$ fed to the DIM loss consisted of a random crop ($0.8$ of the original's size) with color jitter with parameters $0.4$. Although the data augmentation is helpful on some tasks (typically fast-paced, requiring a lot of camera movements), it has shown detrimental effects on others. Below is a list of games on which data augmentation was beneficial: bigfish, bossfight, chaser, coinrun, jumper, leaper and ninja.

The $k$ parameter, which specifies how far into the future the model should make its predictions, worked best when set to 5 on the games: bigfish, chaser, climber, fruitbot, jumper, miner, maze and plunder. For the remaining games, setting $k=1$ yielded better performance.

\paragraph{Baselines} The baselines were implemented on top of our existing architecture and, for models which use contrastive objectives, used the exactly same networks for measuring similarity (i.e. one residual block for CURL and CPC). CURL was implemented based on the authors' code included in their paper and that of MoCo, with EMA on the target network as well as data augmentation (random crops and color jittering) on $S_t$ for randomly sampled $t>0$.

The No Action baseline was tuned on the same budget as DRIML, over $k=1,5$ and with/without data augmentation. Best results are reported in the main paper.
\begin{table}[ht]
\centering
\begin{tabular}[t]{lcc}
\hline
 & DRIML-noact (k=1) & DRIML-noact (k=5)\\
\hline
bigfish & 1.193 $\pm$ 0.04 & 1.33 $\pm$ 0.12 \\
bossfight & 0.466 $\pm$ 0.07 & 0.472 $\pm$ 0.01 \\
caveflyer & 8.263 $\pm$ 0.26 & 5.925 $\pm$ 0.18 \\
chaser & 0.224 $\pm$ 0.01 & 0.229 $\pm$ 0.02 \\
climber & 1.359 $\pm$ 0.13 & 1.574 $\pm$ 0.01 \\
coinrun & 13.146 $\pm$ 1.21 & 9.632 $\pm$ 2.8 \\
dodgeball & 1.221 $\pm$ 0.04 & 1.213 $\pm$ 0.09 \\
fruitbot & 0.714 $\pm$ 0.31 & 5.425 $\pm$ 1.33 \\
heist & 1.042 $\pm$ 0.02 & 0.861 $\pm$ 0.07 \\
jumper & 2.966 $\pm$ 0.1 & 4.314 $\pm$ 0.64 \\
leaper & 5.403 $\pm$ 0.09 & 3.521 $\pm$ 0.3 \\
maze & 0.984 $\pm$ 0.13 & 1.438 $\pm$ 0.26 \\
miner & 0.11 $\pm$ 0.01 & 0.116 $\pm$ 0.01 \\
ninja & 6.437 $\pm$ 0.22 & 5.9 $\pm$ 0.38 \\
plunder & 2.67 $\pm$ 0.08 & 3.2 $\pm$ 0.05 \\
starpilot & 3.699 $\pm$ 0.3 & 2.951 $\pm$ 0.31 \\
\hline
\end{tabular}
\caption{Ablation of the impact of predictive timestep in NCE objective (i.e. $k$) on the no action model's training performance (50M training frames).}
\label{tab:procgen_ablation_nstep_no_action}
\end{table}%
\end{document}